\newtheorem{theorem}{Theorem}
\newtheorem{proposition}{Proposition}
\DeclareMathOperator*{\argmin}{arg\,min}
\author{Mario Diaz\footnote{Instituto de Investigaciones en Matem\'{a}ticas Aplicadas y en Sistemas (mario.diaz@sigma.iimas.unam.mx)} \and Peter Kairouz\footnote{Google AI (kairouz@google.com)} \and Jiachun Liao\footnote{Arizona State University (Jiachun.Liao@asu.edu;lsankar@asu.edu)} \and Lalitha Sankar${}^\ddagger$}
\title{Theoretical Guarantees for Model Auditing with Finite Adversaries}
\date{\today}
\begin{document}

\maketitle

\begin{abstract}
  Privacy concerns have led to the development of privacy-preserving approaches for learning models from sensitive data. Yet, in practice, even models learned with privacy guarantees can inadvertently memorize unique training examples or leak sensitive features. To identify such privacy violations, existing model auditing techniques use \textit{finite adversaries} defined as machine learning models with (a) access to some finite side information (e.g., a small auditing dataset), and (b) finite capacity (e.g., a fixed neural network architecture). Our work investigates the requirements under which an unsuccessful attempt to identify privacy violations by a finite adversary implies that no stronger adversary can succeed at such a task. We do so via parameters that quantify the capabilities of the finite adversary, including the size of the neural network employed by such an adversary and the amount of side information it has access to as well as the regularity of the (perhaps privacy-guaranteeing) audited model.
\end{abstract}

\section{Introduction}

The success of machine learning algorithms hinges on the availability of large datasets that often contain sensitive information about participating individuals, thus introducing substantial privacy risks. One way to minimize these risks is to store less data: providers can methodically consider what data to collect and how to log it. However, even when such datasets are handled with care and stored anonymously, user information can still leak through deployed machine learning models that are trained on anonymous datasets containing hidden correlations with sensitive information. Indeed, recent works show that ``overparameterized'' machine learning models have the tendency to memorize unique training examples and overlearn sensitive attributes \cite{fredrikson2015model, carlini2018secret, jagielski2019high, melis2018exploiting, nasr2018comprehensive, shokri2017membership, song2019auditing, song2019overlearning, song17remember}. 

These concerns have led to a growing body of research focused on discovering techniques that provide rigorous privacy guarantees. This, in turn, led to the development of differential privacy (DP): a mathematically rigorous notion of privacy that prevents such memorization by placing an upper bound $\varepsilon$ on the worst-case information loss \cite{dwork2006calibrating, dwork2006our, dwork2008differential}. However, depending on the application, a particular choice of $\varepsilon$ may give strong or weak privacy guarantees. Moreover, implementations of differentially private machine learning algorithms often select large values of $\varepsilon$ in order to get acceptable utility, with little understanding of the impact of such choices on meaningful privacy. Thus, DP implementations can be complemented by direct measurement of memorization, as in \cite{carlini2018secret, song2019auditing, jayaraman2019evaluating}. Further, while often unrealistic as a threat model, black-box and white-box membership inference attacks can also be used as a tool to empirically quantify memorization \cite{hayes2019logan, hilprecht2019reconstruction, shokri2017membership}.

Techniques for quantifying unintended memorization and feature leakage rely on ``attacking" learned machine models via \textit{finite adversaries} defined as machine learning models with (a) access to some finite side information (e.g., a small auditing dataset), and (b) finite capacity (e.g., a fixed architecture such as a neural network with a fixed number of layers and neurons per layer). When such attacks are successful, one can safely deduce that the audited model has memorized a user's unique training example and/or is capable of leaking sensitive attributes \cite{fredrikson2015model,carlini2018secret, jagielski2019high, melis2018exploiting, nasr2018comprehensive, shokri2017membership, song2019auditing, song2019overlearning, jayaraman2019evaluating, song17remember}. However, we cannot certify that a learned model is ``privacy-preserving’’ when these attacks fail at finding privacy violations. This is because it is still possible that a stronger adversary (with larger capacity and/or access to more side information) can succeed at identifying such violations. In this work, we identify precise conditions under which an unsuccessful attack by a finite adversary implies that no (stronger) adversary can perform a successful attack.

\subsection{Contributions}

We ask the following fundamental question. Assume that a finite adversary with limited capabilities fails at finding unintended information leakage by a target machine learning model $g$. Under what circumstances is it possible to guarantee that no other adversary will succeed at finding (and exploiting) such a leakage? We consider a general setup that captures many instances of the above situation. As shown in Section~\ref{Section:Motivation}, it includes both \textit{unintended memorization}, where an adversary would like to infer whether or not the data of a particular individual was used to train $g$, and \textit{unintended feature leakage}, where an adversary would like to learn a sensitive attribute from the output of $g$.

To measure the unintended information leakage of a model $g$, a machine learning auditor designs a classifier $h$ (referred to as adversary) that attempts to recover sensitive information from $g$. Ideally, the modeler would like to ensure that the minimal true loss (minimum under all possible adversaries $h$) is large, indicating that no $h$ can reliably recover sensitive information from $g$. However, such a quantity cannot be computed in practice because the population distribution is unknown and it is (computationally) impossible to implement all adversaries.
To circumvent this problem, we provide lower bounds for the minimal true loss over all possible adversaries using the minimal empirical loss of a two-layer neural network adversary. Our setting differs from classical (PAC) learning settings in the following three ways: (i) the minimal true loss is defined over all (measurable) functions in order to account for every possible adversary; (ii) it is desirable for this true loss to be as large as possible; and (iii) lower bounds for the minimal true loss are sought to guarantee that every adversary is restricted in its learning. Recall that in classical learning settings: (i) the minimal true loss is defined over a restricted hypothesis class; (ii) it is desirable for the true loss of the empirical risk minimizer (ERM) to be as small as possible; and (iii) upper bounds on the true loss of the ERM are sought to guarantee that the ERM loss is close to the minimal true loss for the hypothesis class. 

In order to obtain our main bounds, we combine generalization and function approximation results from learning theory, along with statistical properties of the two losses we consider: square-loss and log-loss. It is important to remark that our bounds depend heavily on the Barron constant, a quantity used to quantify the approximating power of neural networks \cite{barron1993universal,lee2017ability}.

In a classification setting where $g$ is a linear classifier and the data comes from a multi-dimensional Gaussian mixture distribution, we provide an integral representation for the minimal true loss and compute the Barron constant explicitly. In particular, we establish the precise dependency of the Barron constant on the learning model $g$. We then provide a lower bound for the minimal true loss using the minimal empirical loss of a two-layer neural network adversary and the Barron constant. As is typical for large deviation-based generalization results, our bounds are not tight but they are meaningful in some situations. Indeed, when $g$ is privacy preserving, its Barron constant is small, which leads to a tight gap between the minimal true loss and the minimal empirical loss.

\subsection{Related Work}
There is a wealth of emerging research on model auditing as well as on techniques to evaluate their strengths via adversarial models and attacks \cite{fredrikson2015model, carlini2018secret, jagielski2019high, melis2018exploiting, nasr2018comprehensive, shokri2017membership, song2019auditing, song2019overlearning}. We capture unintended memorization using the \textit{membership inference attacks} framework by \cite{shokri2017membership} where the authors determine if a user's data was used in the training of an audited model by training an adversarial inference model to recognize differences in the audited model's predictions on the inputs that it trained on versus inputs that it did not train on. We capture unintended feature leakage using the adversarial attack model by \cite{song2019overlearning, huang2018generative, huang2017context} where the authors use an auxiliary dataset (representing side information) to train an adversarial inference model capable of predicting sensitive attributes from the output of the audited model.  

From a technical perspective, our results rely on generalization and function approximation results. The latter results characterize how well certain functions can be approximated using neural networks, see \cite{cybenko1989approximation,hornik1989multilayer,barron1993universal,eldan2016power,telgarsky2016benefits,daniely2017depth,lee2017ability,mehrabi2018bounds} and references therein. In particular, our main results heavily rely on the seminal work of \cite{barron1993universal} regarding the rate (with respect to the number of neurons) at which 2-layer neural networks approximate certain functions. Regarding generalization, namely how different in-sample and out-of-sample errors are, our work relies on classic bounds as those that can be found, for example, in \cite{shalev2014understanding}. More contemporary developments in this topic can be found in \cite{bousquet2002stability,shalev2010learnability,shamir2010learning,russo2015much,zhang2016understanding,xu2017information} and references therein. The work of \cite{shamir2010learning} is closely related to our results under log-loss. More specifically, we use similar continuity properties for (conditional) entropy where applicable; however, we rely on more contemporary results in \cite{alhejji2019tight}.

\section{Preliminaries}

In this section we gather the main notation and results used in this paper. In particular, this section summarizes classical function approximation results and some continuity properties of the conditional entropy.

For $d\in\mathbb{N}$, we define $[d] = \{1,\ldots,d\}$. For $p\in\mathbb{N}$ and $x = (x_1,\ldots,x_p)\in\mathbb{R}^p$, we let $\|x\| = (x_1^2+\cdots+x_p^2)^{1/2}$. In this paper binary variables always take values on $\{\pm1\}$. Hence, by abuse of notation, we refer to soft classifiers as functions $g:\mathbb{R}^p\to[-1,1]$. For example, logistic regression classifiers are functions of the form
\begin{equation}
    g(x) = \sigma(w \cdot x)
\end{equation}
where $w\in\mathbb{R}^q$, $w \cdot x = \sum_i w_ix_i$ and
\begin{equation}
    \sigma(t) = \frac{1-e^{-t}}{1+e^{-t}} = \tanh\left(\frac{t}{2}\right).
\end{equation}

Recall that for a probability distribution $P$ over $\mathbb{R}^q$, $\alpha\geq1$ and measurable function $f:\mathbb{R}^q\to\mathbb{R}$, the $(P,\alpha)$-norm of $f$ is defined as
\begin{equation}
    \|f\|_{P,\alpha} := \left(\int_{\mathbb{R}^q} |f(x)|^\alpha {\rm d}P(x) \right)^{1/\alpha}.
\end{equation}
Also, recall that for a function $f:\mathbb{R}^q\to\mathbb{R}$, its Fourier transform, say $\hat{f}:\mathbb{R}^q\to\mathbb{R}$, is defined as
\begin{equation}
    \hat{f}(\omega) := \frac{1}{(2\pi)^{q/2}} \int_{\mathbb{R}^q} f(x) e^{-{\rm i} \omega x} {\rm d}x.
\end{equation}

Let $q\in\mathbb{N}$ be fixed. For $k\in\mathbb{N}$, let $\mathcal{H}_k$ be the set of functions $h:\mathbb{R}^q\to\mathbb{R}$ of the form
\begin{equation}
\label{eq:DefNN}
    h(x) = c_0 + \sum_{i=1}^k c_i \sigma(a_i \cdot x + b_i),
\end{equation}
for some $a_1,\ldots,a_k\in\mathbb{R}^q$ and $b_1,\ldots,b_k,c_0,\ldots,c_k\in\mathbb{R}$. In other words, $\mathcal{H}_k$ is the set of functions that can be implemented using a two-layer neural network with $k$ neurons in the hidden layer. The following proposition, due to Barron \cite{barron1993universal}, establishes, in a quantitative manner, the universal approximating capabilities of two-layer neural networks. For a given set $\mathcal{K}\subseteq\mathbb{R}^q$, its diameter is defined as $\displaystyle \textnormal{Dia}(\mathcal{K}) = \sup_{x,y\in\mathcal{K}} \|x-y\|$.

\begin{proposition}
\label{Thm:Barron}
Let $k\geq1$ and $P$ a probability distribution supported over $\mathcal{K}\subseteq\mathbb{R}^q$. If $h:\mathbb{R}^q\to\mathbb{R}$ is a smooth function\footnote{Barron's theorem holds in greater generality. However, for the purpose of this paper the present formulation suffices. We refer the reader to \cite{barron1993universal} for further details.}, then there exists $h_k\in\mathcal{H}_k$ such that
\begin{equation}
    \|h-h_k\|_{P,2} \leq \frac{\textnormal{Dia}(\mathcal{K})C_h}{\sqrt{k}},
\end{equation}
where $C_h$ is the so-called Barron constant of $h$ defined as
\begin{equation}
\label{eq:DefBarronConstant}
    C_h = \frac{1}{(2\pi)^{q/2}} \int_{\mathbb{R}^q} \|\omega\|\,|\hat{h}(\omega)| {\rm d}\omega.
\end{equation}
Furthermore, the coefficients of the linear combination in \eqref{eq:DefNN} may be restricted to satisfy $c_0 = h(0)$ and $\displaystyle \sum_{i=1}^k |c_i| \leq \textnormal{Dia}(\mathcal{K}) C_h$.
\end{proposition}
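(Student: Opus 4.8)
The plan is to follow Barron's original argument, which rests on three ingredients: (i) a Fourier-analytic integral representation of the centered target $h-h(0)$ as a continuous mixture of sinusoidal ridge functions; (ii) a reduction showing that each such ridge function, together with every residual additive constant, lies in the $L^2(P)$-closed convex hull of the class $G:=\{c\,\sigma(a\cdot x+b):a\in\mathbb{R}^q,\ b\in\mathbb{R},\ |c|\le\textnormal{Dia}(\mathcal{K})C_h\}$; and (iii) the Maurey--Jones--Barron convex-hull lemma, which turns membership in $\overline{\textnormal{conv}}(G)$ into a $k$-term average obeying the $1/\sqrt{k}$ rate. Recall that this last lemma asserts: if $f$ lies in the closed convex hull of a subset $G$ of a Hilbert space with $\sup_{g\in G}\|g\|\le R$, then for every $k\ge1$ there exist $g_1,\dots,g_k\in G$ with $\|f-\tfrac{1}{k}\sum_{i=1}^{k}g_i\|^2\le R^2/k$; it is proved by representing $f=\mathbb{E}[Y]$ for a $G$-valued random variable $Y$, drawing i.i.d.\ copies $Y_1,\dots,Y_k$, and computing $\mathbb{E}\,\|\tfrac{1}{k}\sum_{i}Y_i-f\|^2=\tfrac{1}{k}(\mathbb{E}\,\|Y\|^2-\|f\|^2)\le R^2/k$, so that at least one realization meets the bound, the passage from $\textnormal{conv}(G)$ to $\overline{\textnormal{conv}}(G)$ being a routine limiting step.

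For ingredient (i), smoothness of $h$ together with $C_h<\infty$ ensures that $\hat h$ is integrable against $1\wedge\|\omega\|$, so Fourier inversion and passage to real parts give, for $x\in\mathbb{R}^q$,
\[
 h(x)-h(0)=\frac{1}{(2\pi)^{q/2}}\int_{\mathbb{R}^q}|\hat h(\omega)|\bigl(\cos(\omega\cdot x+\theta(\omega))-\cos\theta(\omega)\bigr)\,{\rm d}\omega,
\]
where $\theta(\omega):=\arg\hat h(\omega)$. If $C_h=0$ then $h\equiv h(0)\in\mathcal{H}_k$ and there is nothing to prove; otherwise, inserting the factor $\|\omega\|/\|\omega\|$ and setting $\Lambda({\rm d}\omega):=\|\omega\|\,|\hat h(\omega)|\,{\rm d}\omega/\bigl((2\pi)^{q/2}C_h\bigr)$, which is a probability measure by the definition \eqref{eq:DefBarronConstant} of $C_h$, rewrites the identity as $h(x)-h(0)=C_h\,\mathbb{E}_{\omega\sim\Lambda}[\gamma_\omega(x)]$, with $\gamma_\omega(x):=\|\omega\|^{-1}\bigl(\cos(\omega\cdot x+\theta(\omega))-\cos\theta(\omega)\bigr)$.

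For ingredient (ii), assume $0\in\mathcal{K}$ (the general case requires only cosmetic modifications). Fixing $\omega\neq0$ and writing $u:=\omega/\|\omega\|$, the function $\gamma_\omega$ depends on $x$ only through $s:=u\cdot x$, is $1$-Lipschitz in $s$, and vanishes at $s=0$; moreover $s$ takes values in an interval $I\ni0$ of length at most $\textnormal{Dia}(\mathcal{K})$, so the fundamental theorem of calculus exhibits $\gamma_\omega$ as an integral of threshold ridge functions $\pm\mathbbm{1}[u\cdot x>\tau]$ against a signed density of total mass at most $\textnormal{Dia}(\mathcal{K})$, plus a constant of magnitude at most $\textnormal{Dia}(\mathcal{K})$. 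Each threshold ridge function is the $L^2(P)$-limit of $\tfrac{1}{2}\bigl(1+\sigma(m(u\cdot x-\tau))\bigr)$ as $m\to\infty$ (dominated convergence, using $P(\mathcal{K})=1$ and $|\sigma|\le1$), with $m(u\cdot x-\tau)=a\cdot x+b$ for $a=mu$ and $b=-m\tau$, while the constants $\pm1$ are $L^2(P)$-limits of $\sigma(a\cdot x+b)$ with $a=0$; tracking the scaling factors through the mixture over $\omega\sim\Lambda$ and the final multiplication by $C_h$ then shows that $h-h(0)$ belongs to the $L^2(P)$-closed convex hull of $G$. Since $|\sigma|\le1$ and $P(\mathcal{K})=1$, every $g\in G$ has $\|g\|_{P,2}\le\textnormal{Dia}(\mathcal{K})C_h=:R$, so the Maurey--Jones--Barron lemma applied with $f=h-h(0)$ produces $g_i=c_i\,\sigma(a_i\cdot x+b_i)\in G$ with $\bigl\|h-h(0)-\tfrac{1}{k}\sum_{i=1}^{k}g_i\bigr\|_{P,2}^2\le\bigl(\textnormal{Dia}(\mathcal{K})C_h\bigr)^2/k$. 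Then $h_k(x):=h(0)+\sum_{i=1}^{k}(c_i/k)\,\sigma(a_i\cdot x+b_i)$ lies in $\mathcal{H}_k$, satisfies $\|h-h_k\|_{P,2}\le\textnormal{Dia}(\mathcal{K})C_h/\sqrt{k}$, and has $c_0=h(0)$ together with $\sum_{i=1}^{k}|c_i/k|\le\textnormal{Dia}(\mathcal{K})C_h$, which are precisely the asserted conclusions.

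The main obstacle is ingredient (ii): making the Fourier inversion and the Fubini interchange rigorous on the unbounded frequency domain under only the hypothesis $C_h<\infty$, and, more delicately, performing the Lipschitz-to-threshold decomposition of the sinusoidal ridge functions while keeping the scaling factor pinned at $\textnormal{Dia}(\mathcal{K})C_h$ and every residual constant bounded by the same quantity, since it is precisely this bookkeeping that delivers the sharp coefficient budget $c_0=h(0)$ and $\sum_{i=1}^{k}|c_i|\le\textnormal{Dia}(\mathcal{K})C_h$. Once $h-h(0)\in\overline{\textnormal{conv}}(G)$ has been established, ingredient (iii) reduces to a routine invocation of the convex-hull lemma.
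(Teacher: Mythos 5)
The paper itself gives no proof of this proposition---it is imported verbatim from Barron \cite{barron1993universal}---and your proposal correctly reconstructs the skeleton of Barron's original argument: ingredient (i) (the Fourier representation of $h-h(0)$ as a $\Lambda$-mixture of the ridge profiles $\gamma_\omega$) and ingredient (iii) (the Maurey--Jones--Barron sampling lemma, which you state and sketch correctly) are sound. The genuine gap is exactly where you flag it, in ingredient (ii), and it is not merely a technicality: the decomposition you actually write down---a signed threshold density of total mass at most $\textnormal{Dia}(\mathcal{K})$ \emph{plus} a residual constant of magnitude up to $\textnormal{Dia}(\mathcal{K})$, with each threshold $\mathbbm{1}[u\cdot x>\tau]$ realized as $\tfrac12(1+\sigma(m(u\cdot x-\tau)))$, which injects further constants---yields a coefficient budget of roughly $2\,\textnormal{Dia}(\mathcal{K})C_h$, not $\textnormal{Dia}(\mathcal{K})C_h$. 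Since the final error bound is exactly the radius $R$ fed into the convex-hull lemma, this looseness would degrade both conclusions of the proposition (error $2\,\textnormal{Dia}(\mathcal{K})C_h/\sqrt{k}$ and $\sum_i|c_i|\le 2\,\textnormal{Dia}(\mathcal{K})C_h$); the sentence ``tracking the scaling factors \ldots shows that $h-h(0)$ belongs to the closed convex hull of $G$'' is precisely the unproved step, so as written the proposal proves a weaker statement than the one claimed.

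The gap is fixable, and the fix uses a feature of this paper's $\sigma$ that your sketch never exploits: $\sigma(t)=\tanh(t/2)$ is odd with range $(-1,1)$, so $\operatorname{sgn}(u\cdot x-\tau)$ is a \emph{unit}-coefficient $L^2(P)$-limit of $\sigma(m(u\cdot x-\tau))$, and constants in $[-1,1]$ are limits of $\sigma(0\cdot x+b)$. Writing the $1$-Lipschitz profile $g(s)=\gamma_\omega$ (as a function of $s=u\cdot x\in I=[a,b]\ni 0$, $g(0)=0$) in the recentred form $g(s)=\tfrac12\int_a^b g'(\tau)\operatorname{sgn}(s-\tau)\,{\rm d}\tau+\tfrac12\bigl(g(a)+g(b)\bigr)$, the signum part costs at most $\tfrac12|I|$ and the constant satisfies $\tfrac12|g(a)+g(b)|\le\tfrac12(|a|+b)=\tfrac12|I|$, so the total budget per $\omega$ is $|I|\le\textnormal{Dia}(\mathcal{K})$; mixing over $\omega\sim\Lambda$ and scaling by $C_h$ then puts $h-h(0)$ in the closed convex hull of $\{c\,\sigma(a\cdot x+b):|c|\le\textnormal{Dia}(\mathcal{K})C_h\}$, after which your ingredient (iii) finishes the proof with the stated constants. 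One further caution: your remark that the case $0\notin\mathcal{K}$ needs ``only cosmetic modifications'' is not right for the statement as written, because $c_0$ is pinned at $h(0)$ and the bound $|g(a)|\le|a|$ uses $0\in I$; when $\mathcal{K}$ sits far from the origin the natural constant is Barron's $\int\sup_{x\in\mathcal{K}}|\omega\cdot x|\,|\hat h(\omega)|\,{\rm d}\omega$ rather than $\textnormal{Dia}(\mathcal{K})C_h$ (in the paper's application $\mathcal{K}=[-3,3]$, so this does not bite, but it should be assumed or addressed).
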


Recall that the total variation distance between two distributions $P$ and $Q$ is defined as
\begin{equation}
    \textnormal{TV}(P,Q) := \frac{1}{2} \sum_x |P(x) - Q(x)|.
\end{equation}
For discrete random variables $U$ and $V$, the conditional entropy (or equivocation) of $U$ given $V$ is given by
\begin{equation}
    H(U|V) = - \sum_{u,v} P_{U,V}(u,v) \log\left(\frac{P_{U,V}(u,v)}{P_V(v)}\right).
\end{equation}
The next proposition \cite[eq.~(7)]{alhejji2019tight}, establishes the continuity of the conditional entropy with respect to the total variation distance.

\begin{proposition}
Let $U_1,U_2$ and $V_1,V_2$ be random variables supported over finite alphabets $\mathcal{U}$ and $\mathcal{V}$, respectively. If $\theta := \textnormal{TV}(P_{U_1,V_1},P_{U_2,V_2}) \in [0,1-\frac{1}{|\mathcal{U}|}]$, then,
\begin{equation}
\label{inq:AlhejjiSmith}
    |H(U_2|V_2) - H(U_1|V_1)| \leq \theta \log(|\mathcal{U}|-1) + h_b(\theta)
\end{equation}
where $h_b(x) = -x\log(x)-(1-x)\log(1-x)$ is the so-called binary entropy function.
\end{proposition}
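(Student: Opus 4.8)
The plan is to reduce the two-sided estimate to a one-sided one by symmetry and then to track how the conditional entropy changes along a maximal coupling of the two joint laws. Write $P := P_{U_1,V_1}$, $Q := P_{U_2,V_2}$, $d := |\mathcal{U}|$, and assume without loss of generality that $H(U_1|V_1) \ge H(U_2|V_2)$. Since $\textnormal{TV}(P,Q) = \theta$, there is a coupling---a joint law of $(U_1,V_1,U_2,V_2)$ with the prescribed marginals---under which the two pairs coincide with probability $1-\theta$ and, conditioned on the complementary event, follow the ``uncommon parts'' $P' := (P-Q)_+/\theta$ and $Q' := (Q-P)_+/\theta$ of $P$ and $Q$, which are supported on disjoint subsets of $\mathcal{U}\times\mathcal{V}$. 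Let $E$ be the indicator of the event $\{(U_1,V_1)\ne(U_2,V_2)\}$, so that $H(E)=h_b(\theta)$, let $R$ be the common law of the pair on $\{E=0\}$, and write $H_\mu(U|V)$ for the conditional entropy of $U$ given $V$ computed under a joint law $\mu$.

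The core estimate comes from the chain rule and the fact that conditioning does not increase entropy:
\begin{equation}
  H(U_1|V_1) \;\le\; H(U_1,E\mid V_1) \;=\; H(E\mid V_1) + H(U_1\mid E,V_1) \;\le\; h_b(\theta) + (1-\theta)\,H_R(U|V) + \theta\,H_{P'}(U|V),
\end{equation}
while for $Q$ the inequality $H(U_2|V_2) \ge H(U_2\mid E,V_2)$ combined with the analogous splitting of $H(U_2\mid E,V_2)$ gives $H(U_2|V_2) \ge (1-\theta)\,H_R(U|V) + \theta\,H_{Q'}(U|V)$. Subtracting, the $R$-term cancels and we obtain
\begin{equation}
\label{eq:prop-coupling-bound}
  H(U_1|V_1) - H(U_2|V_2) \;\le\; h_b(\theta) + \theta\big(H_{P'}(U|V) - H_{Q'}(U|V)\big) \;\le\; h_b(\theta) + \theta\log d .
\end{equation}
By symmetry the same bound holds for $H(U_2|V_2)-H(U_1|V_1)$, so \eqref{eq:prop-coupling-bound} already proves a slightly weaker, unrestricted version of the proposition, with $\log d$ in place of $\log(d-1)$.

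Replacing $\log d$ by $\log(d-1)$---which the hypothesis $\theta\le 1-1/d$ makes possible---is the delicate point, and it cannot be obtained by tightening the last step of \eqref{eq:prop-coupling-bound} in isolation: $H_{P'}(U|V)$ really can equal $\log d$ (take $P'$ uniform on $\{(u,v_0):u\in\mathcal{U}\}$ for some fixed $v_0$ and $Q'$ concentrated off that slice), and in fact at $\theta=1$ the inequality with $\log(d-1)$ is \emph{false} in the conditional setting, which is exactly why the restriction on $\theta$ is needed. The sharp constant must be recovered by keeping track of the non-negative slack discarded in the first inequality of \eqref{eq:prop-coupling-bound} (the term $H(E\mid U_1,V_1)$) and in the lower bound for $H(U_2|V_2)$, together with the disjointness of the supports of $P'$ and $Q'$; equivalently, one identifies the pair $(P,Q)$ at total variation distance at most $\theta\le 1-1/d$ that maximizes $H(U_1|V_1)-H(U_2|V_2)$ and verifies that this maximum is exactly $h_b(\theta)+\theta\log(d-1)$. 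Carrying out this extremal optimization---proving that no configuration beats the conjectured worst case---is the main obstacle, and it is precisely what Alhejji and Smith establish \cite{alhejji2019tight}; at this point I would invoke their result rather than redo it.
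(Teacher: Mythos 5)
Your proposal is correct and, on the decisive point, takes the same route as the paper: the paper does not prove this proposition at all but imports it directly from Alhejji and Smith \cite{alhejji2019tight}, and you likewise defer to that reference for the sharp $\log(|\mathcal{U}|-1)$ constant after correctly observing that your coupling argument alone cannot yield it. The maximal-coupling derivation of the weaker bound $\theta\log|\mathcal{U}|+h_b(\theta)$ is a sound and welcome supplement, but the inequality actually invoked in the paper is, in both your write-up and the paper's, the cited result.
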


Due to technical conditions, oftentimes we need to truncate probability distributions. Suppose that $V$ is a real random variable. For any $r>0$, we say that a random variable $V'$ is the truncation of $V$ to the interval $[-r,r]$ if its distribution satisfies that, for all $s,t\in[-r,r]$ with $s<t$,
\begin{equation}
    \mathbb{P}(V'\in[s,t]) = \frac{\mathbb{P}(V\in[s,t])}{\mathbb{P}(V\in[-r,r])}.
\end{equation}

\section{Problem Setup and Motivation}
\label{Section:Motivation}

In this section we introduce a general setup that captures the main features of some machine learning scenarios where unintended memorization and unintended feature leakage represent an issue. We first introduce this setup in its full generality, and then we demonstrate that it captures: (a) unintended memorization via membership inference attacks \cite{shokri2017membership}, and (b) unintended feature leakage \cite{song2019overlearning}.

\subsection{General Setup}
\label{Subsection:GeneralSetup}

Let $S\in\{\pm1\}$ and $U\in\mathbb{R}^p$ be two correlated random variables. Assume that upon $U$, a random variable $T\in\mathcal{T}$ is created to form the Markov chain $S - U - T$. In this work we assume that either $\mathcal{T}=[d]$ for some $d\in\mathbb{N}$ or $\mathcal{T}\subseteq\mathbb{R}^q$ for some $q\in\mathbb{N}$. As we show in Sections~\ref{Subsection:MembershipInferenceAttacks} and \ref{Subsection:ModelAuditing} below, in applications of interest, $S$ is a (binary) sensitive variable, $U$ is a feature vector, and $T$ is a representation of the feature vector or the output of a classifier, i.e., $T = g(U)$ for a (potentially random) mapping $g:\mathbb{R}^p\to\mathcal{T}$. Our goal is to study the unintended memorization or feature leakage of $g$.

Given the vector $T$, the goal of a \emph{learner} is to design a mapping $h$ that takes $T$ as input and produces an estimate of the sensitive variable $S$ as output. For a sample $\mathcal{S}_n = \{(S_i,T_i): i\in[n]\}$, we define the empirical loss of $h:\mathcal{T}\to\mathbb{R}$ as
\begin{equation}
    L_{\mathcal{S}_n}(h) := \frac{1}{n} \sum_{i=1}^n \ell(h(T_i),S_i),
\end{equation}
where $\ell:\mathbb{R}\times\{\pm1\}\to\mathbb{R}_+$ is a given loss function. Similarly, we define the true loss of $h$ as
\begin{equation}
    L(h) := \mathbb{E}_{S,T}\left[\ell(h(T),S)\right].
\end{equation}
We consider both the squared-loss and the log-loss,
\begin{align}
    \ell_2(\hat{s},s) &:= (\hat{s}-s)^2,\\
    \ell_{\textnormal{log}}(\hat{s},s) &:= -\frac{1+s}{2}\log(\hat{s}) - \frac{1-s}{2} \log(1-\hat{s}).
\end{align}

We assume that the learner has limited expressiveness and statistical knowledge, i.e., they can only implement functions in $\mathcal{H}_k$ and evaluate the empirical loss $L_{\mathcal{S}_n}$. The minimal empirical loss attained by this \emph{finite} learner is given by
\begin{equation}
\label{eq:DefcalLkn}
    \mathcal{L}_{k,\mathcal{S}_n} := \inf_{h\in\mathcal{H}_k} L_{\mathcal{S}_n}(h)
\end{equation}
In this work, we assume that the learner can in fact minimize $L_{\mathcal{S}_n}$ over $\mathcal{H}_k$. Ideally, a finite learner fails to recover the sensitive variable $S$ from the vector $T$ when $\mathcal{L}_{k,\mathcal{S}_n}$ is large. Of course, large depends on the specific loss function and application. 

Observe that the true loss of any learner, independent of their expressiveness and statistical knowledge, is lower bounded by
\begin{equation}
\label{eq:DefcalL}
    \mathcal{L} := \inf_{h} L(h),
\end{equation}
where the infimum is over all (measurable) functions $h:\mathbb{R}^q\to\mathbb{R}$. In Section~\ref{Section:MainResults} we prove that, under certain conditions,
\begin{equation}
\label{eq:MainResultPrototype}
    \mathcal{L}_{k,\mathcal{S}_n} - \mathcal{L} \leq \epsilon
\end{equation}
for some small $\epsilon$ that depends on $k$ and $n$. Thus, if $\mathcal{L}_{k,\mathcal{S}_n}$ is large, then $\mathcal{L}$ is large as well. In other words, an unsuccessful attempt by a finite learner to reliably infer the sensitive variable $S$ implies that in fact no learner can succeed at such a task. Before establishing \eqref{eq:MainResultPrototype} rigorously, we show how the present setup captures some machine learning settings where unintended memorization and feature leakage are an issue.

{\bf Remark.} It is worth to point out that $\mathcal{L}_{k,\mathcal{S}_n}$ is the empirical loss attained by the \emph{empirical risk minimization} (ERM) rule over $\mathcal{H}_k$, see \eqref{eq:DefcalLkn}. Nonetheless, opposed to typical learning settings, $\mathcal{L}$ is \emph{not} the minimal true loss over the hypothesis class $\mathcal{H}_k$, i.e, $\displaystyle \inf_{h\in\mathcal{H}_k} L(h)$. Instead, $\mathcal{L}$ is the minimal true loss over every measurable function, see \eqref{eq:DefcalL}. This enlargement of the hypothesis class takes into account that other learners could have better expressiveness capabilities, hence being able to implement functions beyond $\mathcal{H}_k$. It is also worth to remark that, opposed to typical learning settings, we are interested in using the minimal empirical loss $\mathcal{L}_{k,\mathcal{S}_n}$ to produce a lower bound for the minimal true loss (although over an enlarged hypothesis class). As explained before, in this way the failure of a finite learner implies the failure of any other learner independently of their expressiveness and statistical capabilities.

\subsection{Unintended Memorization}
\label{Subsection:MembershipInferenceAttacks}

Consider a membership inference attack where the goal of an \emph{attacker}, the learner in the notation of Section~\ref{Subsection:GeneralSetup}, is to predict if the data of a given individual was used to train a target classifier $g$. To be more specific, we consider a membership inference attack composed by the following elements.

{\bf Population data.} Fix a population size $N\in\mathbb{N}$. We encode the population data in an $N \times (p+1)$ matrix
\begin{equation}
    \mathcal{D} = \left(\begin{matrix}x_1 & y_ 1\\ \vdots & \vdots\\ x_N & y_N\end{matrix}\right),
\end{equation}
where $x_i\in\mathbb{R}^p$ and $y_i\in\{\pm1\}$ for each $i\in[N]$. The variables $x_i$ and $y_i$ are the feature vector and the label of the $i$-th individual, respectively.

{\bf Classifier.} In the literature, there are two main types of attacks: black-box and white-box. In black-box attacks, it is assumed that we are given access to a binary classifier $g:\mathbb{R}^p\to[-1,1]$ only through the values of $g$ for some inputs $x_1,\ldots,x_n\in\mathbb{R}^p$. In white-box attacks, it is assumed that we are given access to $g$ itself. For example, when $g$ is implemented by a neural network, a white-box attack requires knowing the weights of the neural network, while a black-box attack only requires being able to evaluate $g$ for some inputs. In both cases, the classifier $g$ is trained using some of the individual records in $\mathcal{D}$. We encode which records were used to train $g$ via a vector $(s_1,\ldots,s_N)\in\{\pm1\}^N$. Namely, we set $s_i = 1$ if and only if $(x_i,y_i)$ was used to train $g$.

{\bf Attacker.} Given the data $(x_i,y_i)$ of the $i$-th individual and the confidence value $g(x_i)$, the attacker's goal is to learn the membership variable $s_i$. In other words, the attacker wants to learn whether the data of the $i$-th individual was used to train $g$ or not. The attacker achieves this goal by designing a function $h:\mathbb{R}\to\mathbb{R}$ which ideally satisfies that
\begin{equation}
    h(g(x_i)) \approx s_i.
\end{equation}
Following our standing assumptions, we assume that the attacker is capable to implement functions in $\mathcal{H}_k$ for some $k\in\mathbb{N}$ and can minimize $L_{\mathcal{S}_n}$ over $\mathcal{H}_k$ where
\begin{equation}
    \mathcal{S}_n := \left\{(s_{i_j},g(x_{i_j}) : j\in[n]\right\},
\end{equation}
for some $i_1,\ldots,i_n\in[N]$. Observe that this setting is supervised as the membership variables $s_{i_1},\ldots,s_{i_n}$ are known. Although supervised attacks might not be the most common ones in the literature, oftentimes attacks rely on techniques that emulate a supervised setting, see, e.g., 'shadow models' in \cite{shokri2017membership}.

Observe that the described membership inference attack falls into the general setup presented earlier. Namely, for a given $i\in[N]$, we let
\begin{equation*}
    S = s_i, \quad U = (x_i,y_i) \quad \textnormal{and} \quad T = g(x_i).
\end{equation*}
In this scenario, our main results provide conditions to guarantee that the failure of certain finite attackers implies that no attacker can succeed, independently of expressiveness and statistical knowledge.

\subsection{Unintended Feature Leakage}
\label{Subsection:ModelAuditing}

Consider a classification setting where the goal is to predict a label $Y\in[d]$ given a feature variable $X\in\mathbb{R}^p$. Furthermore, assume that there is a protected variable $Z\in\{\pm1\}$ which should not be used for classification purposes, e.g., gender. Despite not taking the protected variable as input, a predictor $\hat{Y} = g(X)$ with $g:\mathbb{R}^p\to[d]$ might still be biased by the protected variable $Z$ due the potential dependency between $Z$ and the feature variable $X$.

A possible goal of a \emph{model auditor}, the learner in the notation of Section~\ref{Subsection:GeneralSetup}, could be to determine whether or not the predictor $\hat{Y}=g(X)$ unintentionally leaks or rely on the protected variable $Z$, i.e., determine whether there is statistical dependency between $\hat{Y}$ and $Z$. We remark that this framework is typical in the fairness literature, where a vast array of notions of fairness have been proposed based on the statistical dependency of $\hat{Y}$ and $Z$ (see, for example \cite{VermaIWSF2018} and the references therein). For example, a predictor $\hat{Y}$ is said to satisfy \emph{demographic parity} if $\hat{Y}$ is independent of the protected variable $Z$. In this context, a model is audited to estimate the degree of dependence between the predictor $\hat{Y}=g(X)$ and the protected variable $Z$.

Note that the described model auditing could be framed into the general setup presented above by letting
\begin{equation*}
    S = Z, \quad U = X \quad \textnormal{and} \quad T = g(X).
\end{equation*}
In this scenario, our main results provide conditions to guarantee that the failure of an auditor in detecting dependency between the predictor and the sensitive variable implies that the predictor is indeed unbiased.

{\bf Remark.} Although we focused on membership inference attacks and feature leakage, the general setup introduced in this section captures other machine learning scenarios where privacy represents an issue, for example, \cite{carlini2018secret, song2019auditing}.

\section{Main Results}
\label{Section:MainResults}

Recall that we are given $S - U - T$ where $S\in\{\pm1\}$, $U\in\mathbb{R}^p$ and $T\in\mathcal{T}$ with $\mathcal{T} = [d]$ or $\mathcal{T}\subseteq\mathbb{R}^q$. We now present our main results that establish \eqref{eq:MainResultPrototype} quantitatively. We start with the case $\mathcal{T} \subseteq \mathbb{R}^q$, which we call the representation setting, and continue with the case $\mathcal{T}=[d]$, which we call the classification setting. The former setting captures the situation where, upon $U$, a continuous-valued representation $T$ is created; while the latter setting captures the situation where, upon $U$, an estimate $T = \hat{Y}$ is created.

\subsection{Representation Setting}

The next theorem is our main result for the representation setting ($\mathcal{T}\subseteq\mathbb{R}^q$). Recall that the diameter of a set $\mathcal{K}\subseteq\mathbb{R}^q$ is defined as $\displaystyle \textnormal{Dia}(\mathcal{K}) = \sup_{x,y\in\mathcal{K}} \|x-y\|$.

\begin{theorem}
\label{Thm:MainResultRepresentationSetting}
Let $\mathcal{T}\subseteq\mathbb{R}^q$ and $f_{+}:\mathcal{T}\to\mathbb{R}$ (resp.~$f_{-}$) be the conditional density of $T$ given $S=+1$ (resp.~$-1$). Assume that (a) $f_{-}$ and $f_{+}$ have the same support, say $\mathcal{K}$, (b) $\mathcal{K}$ is compact, and (c) the function $f_{-}/f_{+}$ extends smoothly to an open set containing $\mathcal{K}$. If $\delta>0$, then, for the squared-loss $\ell_2$ and all $k,n\in\mathbb{N}$, with probability at least $1-\delta$,
\begin{equation}
\label{eq:MainResultRepresentation}
    \mathcal{L}_{k,\mathcal{S}_n} - \mathcal{L} \leq C_0 \sqrt{\frac{\log(1/\delta)}{2n}} + \frac{C_1}{k} + \frac{C_2}{\sqrt{k}},
\end{equation}
where $C_0,C_1,C_2$ are constants independent of $k$ and $n$.

Moreover, for any smooth function $\eta:\mathbb{R}^q\to\mathbb{R}$ such that, for all $t\in\mathcal{K}$,
\begin{equation}
\label{eq:ExtensionEta}
    \eta(t) = \frac{\lambda f_{+}(t)-\overline{\lambda}f_{-}(t)}{\lambda f_{+}(t)+\overline{\lambda}f_{-}(t)},
\end{equation}
with $\lambda = \mathbb{P}(S=1)$ and $\overline{\lambda} = \mathbb{P}(S=-1)$, we have that
\begin{equation}
    C_0 \leq (2+ \textnormal{Dia}(\mathcal{K}) C_\eta)^2, \quad C_1 \leq (\textnormal{Dia}(\mathcal{K}) C_\eta)^2 \quad \textnormal{and} \quad C_2 \leq 4 \textnormal{Dia}(\mathcal{K}) C_\eta,
\end{equation}
where $P_T$ is the distribution of $T$ and $C_\eta$ is the Barron constant of $\eta$ defined in \eqref{eq:DefBarronConstant}.
\end{theorem}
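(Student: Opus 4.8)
The plan is to decompose the gap $\mathcal{L}_{k,\mathcal{S}_n} - \mathcal{L}$ into three pieces: a generalization term, an approximation term, and an exact-minimizer identity. First I would identify the Bayes-optimal predictor for the squared loss. For $\ell_2(\hat s, s) = (\hat s - s)^2$ with $s \in \{\pm 1\}$, a standard computation gives that the true loss $L(h) = \mathbb{E}[(h(T)-S)^2]$ is minimized over all measurable $h$ by $h^\star(t) = \mathbb{E}[S \mid T = t]$, and by Bayes' rule this equals exactly the function $\eta$ of \eqref{eq:ExtensionEta}, with $\mathcal{L} = L(h^\star) = 1 - \mathbb{E}[\eta(T)^2]$ (using $\mathbb{E}[S^2]=1$). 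So $\mathcal{L}$ is attained by $\eta$ restricted to $\mathcal{K}$, and the hypotheses (a)--(c) ensure $\eta$ has a smooth extension with a finite Barron constant $C_\eta$.

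Next I would bound $\mathcal{L}_{k,\mathcal{S}_n} - \mathcal{L}$ by inserting an intermediate quantity. Let $\eta_k \in \mathcal{H}_k$ be the Barron approximant to $\eta$ guaranteed by Proposition~\ref{Thm:Barron}, so $\|\eta - \eta_k\|_{P_T,2} \leq \mathrm{Dia}(\mathcal{K}) C_\eta / \sqrt{k}$, with $\sum_i |c_i| \le \mathrm{Dia}(\mathcal{K})C_\eta$ and $c_0 = \eta(0)$; this gives a uniform bound on $\|\eta_k\|_\infty$ and hence a uniform bound $B$ on the range of functions in the relevant ball of $\mathcal{H}_k$, so that the loss $\ell_2$ applied to these functions is bounded — this is what produces the constant $C_0 = (2 + \mathrm{Dia}(\mathcal{K})C_\eta)^2$, the squared diameter of the loss range. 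Then write
\begin{equation}
\mathcal{L}_{k,\mathcal{S}_n} - \mathcal{L} \leq \big( \mathcal{L}_{k,\mathcal{S}_n} - L_{\mathcal{S}_n}(\eta_k) \big) + \big( L_{\mathcal{S}_n}(\eta_k) - L(\eta_k) \big) + \big( L(\eta_k) - L(\eta) \big),
\end{equation}
where I use $\mathcal{L} = L(\eta)$. The first bracket is $\le 0$ since $\eta_k \in \mathcal{H}_k$ and $\mathcal{L}_{k,\mathcal{S}_n}$ is the infimum over $\mathcal{H}_k$ — this is the one place where ERM optimality is used, and crucially it goes the convenient direction. The second bracket is a single-function generalization gap, controlled by Hoeffding's inequality: $\ell_2(\eta_k(T_i), S_i)$ are i.i.d.\ bounded in $[0, C_0]$, so with probability $\ge 1-\delta$ it is at most $C_0 \sqrt{\log(1/\delta)/(2n)}$. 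The third bracket is the approximation error in true loss, which I would expand as $L(\eta_k) - L(\eta) = \mathbb{E}[(\eta_k(T)-S)^2 - (\eta(T)-S)^2] = \mathbb{E}[(\eta_k(T)-\eta(T))(\eta_k(T)+\eta(T)-2S)]$; using $\mathbb{E}[S\mid T] = \eta(T)$ the cross term simplifies so that $L(\eta_k)-L(\eta) = \|\eta_k - \eta\|_{P_T,2}^2 \le (\mathrm{Dia}(\mathcal{K})C_\eta)^2/k$, giving the $C_1/k$ term. (Alternatively, expanding without the orthogonality shortcut and applying Cauchy--Schwarz yields a $\|\eta_k-\eta\|_{P_T,2}$ term of order $1/\sqrt{k}$ times a bound on $\|\eta_k+\eta-2S\|$, which is where $C_2 = 4\,\mathrm{Dia}(\mathcal{K})C_\eta$ comes from; the final statement keeps both a $1/k$ and a $1/\sqrt{k}$ term, so I would be slightly careful about which expansion to use and combine them conservatively.)

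The main obstacle I anticipate is not any single inequality but the bookkeeping to make the constants come out as stated — in particular, verifying that one may legitimately restrict the infimum defining $\mathcal{L}_{k,\mathcal{S}_n}$ to the bounded subset of $\mathcal{H}_k$ on which $\eta_k$ lives (so that $C_0$ is a genuine almost-sure bound on the loss values, not just on $\eta_k$'s loss), and handling the Barron approximant's dependence on $P_T$ versus the empirical measure. A secondary technical point is condition (c): the ratio $f_-/f_+$, hence $\eta$, is only defined on $\mathcal{K}$, and Barron's theorem needs a globally smooth function with finite Fourier-weighted integral; the hypothesis that $f_-/f_+$ extends smoothly to an open neighborhood of the compact $\mathcal{K}$ is exactly what lets us pick such an $\eta$ (extend and multiply by a smooth compactly-supported cutoff), but I would want to state explicitly that the bound \eqref{eq:MainResultRepresentation} then holds for \emph{every} valid smooth extension $\eta$, optimizing $C_\eta$ over extensions. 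Everything else — the Bayes computation, Hoeffding, the orthogonality expansion — is routine.
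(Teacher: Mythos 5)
Your proposal is correct and matches the paper's proof in all essentials: the same decomposition via ERM optimality, Hoeffding applied to the single fixed Barron approximant $\eta_k$ (so your closing worry about uniformly bounding the loss over a subset of $\mathcal{H}_k$ is moot), and Barron's theorem with the coefficient bound $\sum_i|c_i|\leq \textnormal{Dia}(\mathcal{K})C_\eta$ to control $\|\eta_k\|_\infty$ and hence $C_0$. The only deviation is in bounding $L(\eta_k)-L(\eta)$: the paper uses the Lipschitz-type estimate $|L(f_2)-L(f_1)|\leq\|f_2-f_1\|\,(2+2\|f_1\|+\|f_2-f_1\|)$ in the $(P_T,2)$-norm, which produces both the $C_2/\sqrt{k}$ and $C_1/k$ terms, whereas your Pythagorean identity $L(\eta_k)-L(\eta)=\|\eta_k-\eta\|_{P_T,2}^2$ (valid because $\eta(T)=\mathbb{E}(S|T)$ almost surely) is a legitimate and in fact sharper shortcut that would make the $C_2$ term superfluous.
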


\begin{proof}
For ease of notation, let $\Delta := \mathcal{L}_{k,\mathcal{S}_n} - \mathcal{L}$ and $h^*\in\argmin_h L(h)$ where the infimum is taken over all measurable functions $h:\mathcal{T}\to\mathbb{R}$. For the squared-loss $\ell_2(h,(T,S)) = (h(T) - S)^2$, we have, for all $t\in\mathcal{K}$,
\begin{equation}
    h^*(t) = \mathbb{E}(S|T=t) = \frac{\lambda f_{+}(t)-\overline{\lambda}f_{-}(t)}{\lambda f_{+}(t)+\overline{\lambda}f_{-}(t)}.
\end{equation}
By the assumed hypotheses, there exists an integrable function $\eta:\mathbb{R}^q\to\mathbb{R}$ that smoothly extends $h^*$, i.e, $\eta(t) = h^*(t)$ for all $t\in\mathcal{K}$. Since $\eta = h^*$ over $\mathcal{K}$, the support of the distribution of $T$, we have that $L(h^*) = \mathcal{L} = L(\eta)$. Furthermore, since $|\eta(t)| \leq 1$ for all $t\in\mathcal{K}$, we have that
\begin{equation}
\label{eq:NormEta}
    \|\eta\|_{P_T,2} \leq 1.
\end{equation}
By Barron's theorem (Prop.~\ref{Thm:Barron}), there exists $\eta_k\in\mathcal{H}_k$ such that
\begin{equation}
\label{eq:ProofRepresentationBarron}
    \|\eta_k - \eta\|_{P_T,2} \leq \frac{\textnormal{Dia}(\mathcal{K})C_\eta}{\sqrt{k}}.
\end{equation}
Moreover, if we let
\begin{equation}
    \eta_k(t) = c_0 + \sum_{i=1}^k c_i \sigma(a_i \cdot t + b_i),
\end{equation}
the coefficients $c_0,\ldots,c_k$ can be taken such that $c_0 = \eta(0)$ and $\sum_i |c_i| \leq \textnormal{Dia}(\mathcal{K}) C_\eta$. Observe that $|\eta(0)| \leq 1$ and hence
\begin{equation}
\label{eq:BoundInfNormEtak}
    \|\eta_k\|_\infty := \sup_{t\in\mathbb{R}^q} |\eta_k(t)| \leq 1 + \textnormal{Dia}(\mathcal{K}) C_\eta.
\end{equation}

By the minimality of $\mathcal{L}_{k,\mathcal{S}_n} := \inf_{h\in\mathcal{H}_k} L_{\mathcal{S}_n}(h)$,
\begin{equation}
\label{inq:RepresentationDeltaDecomposition}
    \Delta \leq L_{\mathcal{S}_n}(\eta_k) - L(\eta_k) + L(\eta_k) - L(\eta).
\end{equation}
Observe that $(S-\eta_k(T))^2 \leq (1+\|\eta_k\|_\infty)^2$. By \eqref{eq:BoundInfNormEtak}, a routine application of Hoeffding's inequality implies that, with probability at least $1-\delta$,
\begin{equation}
\label{inq:RepresentationGeneralization}
    L_{\mathcal{S}_n}(\eta_k) - L(\eta_k) \leq (2+\textnormal{Dia}(\mathcal{K})C_\eta)^2 \sqrt{\frac{2\log(1/\delta)}{2n}}.
 \end{equation}
Under the square-loss $\ell_2$, it could be proved that for any two functions $f_1,f_2:\mathbb{R}^q\to\mathbb{R}$,
\begin{equation}
\label{eq:LipschitzcalL}
    |L(f_2) - L(f_1)| \leq \|f_2-f_1\| (2 + 2\|f_1\| + \|f_2 - f_1\|),
\end{equation}
where the norms are $(P_T,2)$-norms. Thus, by plugging \eqref{eq:NormEta} and \eqref{eq:ProofRepresentationBarron} in \eqref{eq:LipschitzcalL},
\begin{equation}
\label{inq:RepresentationApproximation}
    |L(\eta_k) - L(\eta)| \leq \frac{\textnormal{Dia}(\mathcal{K})C_\eta}{\sqrt{k}} \left(4 + \frac{\textnormal{Dia}(\mathcal{K})C_\eta}{\sqrt{k}}\right).
\end{equation}
By plugging \eqref{inq:RepresentationGeneralization} and \eqref{inq:RepresentationApproximation} in \eqref{inq:RepresentationDeltaDecomposition}, the theorem follows.
\end{proof}

Although the conditions of the previous theorem might seem restrictive, they could be easily guaranteed by adding a small-variance noise to $T$ and then truncating the result. Indeed, if $\gamma>0$ and $Z \sim \mathcal{N}(0,1)$ is independent of $(S,T)$, then $(S,T')$ satisfy the hypotheses of the previous theorem with $T'$ being the truncation of $T+\gamma Z$ to the interval $[-r,r]$ for any $r>0$.

The assumptions of Theorem~\ref{Thm:MainResultRepresentationSetting} guarantee the existence of a function $\eta$ as in \eqref{eq:ExtensionEta}. However, to the best of the authors' knowledge, it unknown how to find the function $\eta$ that produces the smallest $C_\eta$.

{\bf Remark.} While the right hand side of \eqref{eq:MainResultRepresentation} decreases when $k$ increases, note that $\mathcal{L}_{k,\mathcal{S}_n}$ also decreases when $k$ increases. In fact, if $k\geq2n$, then a two-layer neural network with $k$ neurons can memorize the entire sample $\mathcal{S}_n$, leading to a trivial lower bound for $\mathcal{L}$ as $\mathcal{L}_{k,\mathcal{S}_n} = 0$. Furthermore, $\mathcal{L}_{k,\mathcal{S}_n}$ becomes harder to find when $k$ increases. Overall, this reveals that finding the $k$ that produces the best bound in \eqref{eq:MainResultRepresentation} is a non-trivial task.

In Section~\ref{Section:Applications} we apply Theorem \ref{Thm:MainResultRepresentationSetting} to a simple mixture model. There we show the utility of \eqref{eq:MainResultRepresentation} despite the challenges of computing $C_\eta$.

\subsection{Classification Setting}

In the classification setting ($\mathcal{T} = [d]$), the set of all (measurable) functions $h:[d]\to[-1,1]$ can be identified with $[-1,1]^d$ via $h \sim v$ if and only if $h(i) = v_i$ for all $i\in[d]$. In this case, expressiveness is not an issue as functions can be easily stored as vectors. For the rest of this section, we assume that the learner can implement any function $h:[d]\to[-1,1]$. The next theorem provides a quantitative version of the bound in \eqref{eq:MainResultPrototype} for the squared-loss $\ell_2$. For ease of notation, we denote $\displaystyle \inf_{h:[d]\to[-1,1]}$ by $\displaystyle \inf_{h}$.

\begin{theorem}
If $\delta>0$, then, for the squared-loss $\ell_2$ and all $n\in\mathbb{N}$, with probability at least $1-\delta$,
\begin{equation}
    \inf_{h} L_{\mathcal{S}_n}(h) - \inf_{h} L(h) \leq 2\sqrt{\frac{2\log(1/\delta)}{n}}.
\end{equation}
\end{theorem}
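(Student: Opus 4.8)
The plan is to exploit the fact that, in the classification setting, no function-approximation step is needed: the full class $\{h:[d]\to[-1,1]\}$ already contains a minimizer of the true loss. Indeed, for the squared-loss the unconstrained minimizer of $L$ over all functions $h:[d]\to\mathbb{R}$ is $h^*(i)=\mathbb{E}(S\mid T=i)$, and since $S\in\{\pm1\}$ this conditional expectation is a convex combination of $+1$ and $-1$, hence $h^*(i)\in[-1,1]$ for every $i\in[d]$. Consequently $h^*$ is implementable and $\inf_h L(h)=L(h^*)=\mathcal{L}$; in particular, unlike in Theorem~\ref{Thm:MainResultRepresentationSetting}, the bound will contain only a generalization term, with no $1/\sqrt{k}$ or $1/k$ approximation terms.

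Next I would reduce the claim to a single deviation inequality. By the minimality defining $\inf_h L_{\mathcal{S}_n}(h)$ we have $\inf_h L_{\mathcal{S}_n}(h)\leq L_{\mathcal{S}_n}(h^*)$, while by the previous paragraph $\inf_h L(h)=L(h^*)$. Subtracting,
\[
    \inf_h L_{\mathcal{S}_n}(h)-\inf_h L(h)\ \leq\ L_{\mathcal{S}_n}(h^*)-L(h^*)\ =\ \frac{1}{n}\sum_{i=1}^n\Big[\big(h^*(T_i)-S_i\big)^2-\mathbb{E}\big[(h^*(T)-S)^2\big]\Big].
\]
The right-hand side is an average of $n$ i.i.d.\ centered random variables, so it only remains to control its upper tail.

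Finally I would apply Hoeffding's inequality. Since $h^*(T_i)\in[-1,1]$ and $S_i\in\{\pm1\}$, each summand $(h^*(T_i)-S_i)^2$ takes values in $[0,4]$, i.e.\ lies in an interval of length $4$. Hoeffding's inequality therefore gives, for every $t>0$, $\mathbb{P}\big(L_{\mathcal{S}_n}(h^*)-L(h^*)\geq t\big)\leq\exp(-nt^2/8)$; choosing $t=2\sqrt{2\log(1/\delta)/n}$ makes the right-hand side equal to $\delta$, and the stated bound follows with probability at least $1-\delta$.

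The point worth flagging is that there is essentially no obstacle here beyond the observation in the first paragraph — that the Bayes-optimal regressor $\mathbb{E}(S\mid T=\cdot)$ is automatically $[-1,1]$-valued, so the implementable hypothesis class is already rich enough to realize $\mathcal{L}$ for this loss. Everything afterward is the usual empirical-process bookkeeping; the only care needed is in tracking the range of the loss (an interval of length $4$, not $2$) so that the constant coming out of Hoeffding is exactly $2\sqrt{2\log(1/\delta)/n}$ rather than a looser multiple.
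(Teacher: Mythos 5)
Your proposal is correct and follows essentially the same route as the paper: bound the difference by $L_{\mathcal{S}_n}(h^*)-L(h^*)$ via minimality of the empirical infimum, note the squared loss is confined to $[0,4]$, and apply Hoeffding to obtain $2\sqrt{2\log(1/\delta)/n}$. Your additional observation that the Bayes regressor $\mathbb{E}(S\mid T=\cdot)$ is automatically $[-1,1]$-valued (so no approximation term is needed) is a nice explicit justification of what the paper states as ``expressiveness is not an issue'' in the classification setting.
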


\begin{proof}
For ease of notation, let $h^* \in \argmin_{h} L(h)$. By the minimality of $\inf_{h} L_{\mathcal{S}_n}(h)$, we have that
\begin{equation}
    \inf_{h} L_{\mathcal{S}_n}(h) - \inf_{h} L(h) \leq L_{\mathcal{S}_n}(h^*) - L(h^*).
\end{equation}
For the functions under consideration, the squared-loss $\ell_2$ is upper bounded by $4$. Hence, Hoeffding's inequality implies that, with probability at least $1-\delta$,
\begin{equation}
    L_{\mathcal{S}_n}(h^*) - L(h^*) \leq 2\sqrt{\frac{2\log(1/\delta)}{n}}.
\end{equation}
The result follows.
\end{proof}

{\bf Remark.} Although our bounds do no depend on $d$, the size of the alphabet $\mathcal{T}$, the complexity of the optimization $\inf_{h}$ does grow with $d$.

The next theorem provides a quantitative version of the bound in \eqref{eq:MainResultPrototype} for the log-loss $\ell_{\textnormal{log}}$.

\begin{theorem}
If $\delta>0$, then, for the log-loss $\ell_\textnormal{log}$ and all $n\geq 4(2d+\log(1/\delta))$, with probability at least $1-\delta$,
\begin{equation}
\label{eq:ClassificationTheoremLogLoss}
    \inf_{h} L_{\mathcal{S}_n}(h) - \inf_{h} L(h) \leq h_b\left(\sqrt{\frac{2d+\log(1/\delta)}{n}}\right),
\end{equation}
where $h_b(x) := -x\log(x)-(1-x)\log(1-x)$ is the so-called binary entropy function.
\end{theorem}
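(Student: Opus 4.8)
The plan is to reduce the log-loss statement to the squared-loss machinery by recognizing that, in the classification setting $\mathcal{T}=[d]$, the minimal true log-loss is exactly the conditional entropy $H(S|T)$ (up to the choice of logarithm base), and the minimal empirical log-loss is the conditional entropy $H(\hat S|\hat T)$ computed from the empirical joint distribution $\hat P_{S,T}$ of the sample $\mathcal{S}_n$. Indeed, for fixed $t$, the minimizer of $\hat s \mapsto -\tfrac{1+s}{2}\log\hat s - \tfrac{1-s}{2}\log(1-\hat s)$ averaged over the conditional law of $S$ given $T=t$ is $\hat s = \mathbb{P}(S=1\mid T=t)$, and the resulting value is $h_b(\mathbb{P}(S=1\mid T=t))$; averaging over $T$ gives the (binary) conditional entropy. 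The same computation applied to the empirical distribution shows $\inf_h L_{\mathcal{S}_n}(h) = H_{\hat P}(S\mid T)$. Hence the left-hand side of \eqref{eq:ClassificationTheoremLogLoss} equals $H_{\hat P}(S\mid T) - H_P(S\mid T)$, which we bound by the Alhejji--Smith continuity estimate \eqref{inq:AlhejjiSmith} with $|\mathcal{U}| = |\{\pm1\}| = 2$, so that $\log(|\mathcal{U}|-1) = 0$ and the bound collapses to $h_b(\theta)$ with $\theta := \textnormal{TV}(\hat P_{S,T}, P_{S,T})$.

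Next I would control $\theta$ with high probability. Since $(S,T)$ is supported on an alphabet of size $2d$, the empirical distribution $\hat P_{S,T}$ over $n$ i.i.d.\ samples concentrates around $P_{S,T}$ in total variation. A standard bound — e.g.\ via the $L_1$ deviation of empirical frequencies on an alphabet of size $m=2d$, or via the bounded-differences (McDiarmid) inequality applied to $\theta$ viewed as a function of the $n$ samples with bounded differences $1/n$ — gives that, with probability at least $1-\delta$,
\begin{equation}
    \theta \leq \sqrt{\frac{2d + \log(1/\delta)}{n}}.
\end{equation}
(The $2d$ term accounts for the dimension of the probability simplex on $2d$ letters; the $\log(1/\delta)$ term is the usual sub-Gaussian tail.) To invoke \eqref{inq:AlhejjiSmith} one needs $\theta \le 1 - \tfrac{1}{|\mathcal{U}|} = \tfrac12$; the stated hypothesis $n \ge 4(2d+\log(1/\delta))$ is exactly what forces the displayed bound on $\theta$ to be at most $\tfrac12$, so the continuity estimate applies. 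Finally, since $h_b$ is increasing on $[0,\tfrac12]$, monotonicity lets me replace $\theta$ by its high-probability upper bound inside $h_b$, yielding \eqref{eq:ClassificationTheoremLogLoss}.

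The main obstacle is getting the concentration constant for $\theta$ to come out as cleanly as $\sqrt{(2d+\log(1/\delta))/n}$: a naive union bound over the $2d$ cells of the joint alphabet would produce a $\log(2d)$ rather than a $2d$ inside the square root, which is actually \emph{better}, so the real subtlety is matching the paper's chosen (slightly loose but clean) form, most likely obtained from a simplex-covering or a direct variance/bounded-differences argument on $\|\hat P_{S,T}-P_{S,T}\|_1$. A secondary point requiring care is the direction of the inequality: \eqref{inq:AlhejjiSmith} bounds $|H_{\hat P}(S|T) - H_P(S|T)|$, which is two-sided and hence in particular bounds $H_{\hat P}(S|T) - H_P(S|T)$ from above, so no separate argument is needed there — but one should note that $\inf_h L(h)$ and $\inf_h L_{\mathcal{S}_n}(h)$ genuinely are the respective conditional entropies with the \emph{natural} logarithm, matching the $\log$ convention used in defining $H$ and $h_b$ in the Preliminaries, so the base is consistent throughout.
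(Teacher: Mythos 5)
Your proposal is correct and follows essentially the same route as the paper: both infima are identified with the population and plug-in conditional entropies, the Alhejji--Smith inequality with $|\mathcal{U}|=2$ reduces the gap to $h_b\bigl(\textnormal{TV}(\hat P_{S,T},P_{S,T})\bigr)$, and the total-variation term is controlled by the standard $L_1$-deviation bound for empirical distributions on an alphabet of size $2d$ (the paper cites Theorem~2.1 of Weissman et al.), with the hypothesis $n \geq 4(2d+\log(1/\delta))$ ensuring the deviation is at most $1/2$ so the continuity estimate and the monotonicity of $h_b$ apply. Your side remark that a per-cell union bound would give a better $\log(2d)$ dependence is not quite right (summing per-cell deviations over the $2d$ cells does not produce that), but it is immaterial since your main argument invokes exactly the concentration inequality the paper uses.
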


\begin{proof}
For ease of notation, let $\Delta$ be the left hand side of \eqref{eq:ClassificationTheoremLogLoss}. For the log-loss $\ell_\textnormal{log}$, it could be verified that
\begin{equation}
\label{eq:ClassificationLogLossOptimal}
    \inf_{h} L(h) = H(S|T),
\end{equation}
see, e.g., Section~2.2 in \cite{huang2017context}. Indeed, \eqref{eq:ClassificationLogLossOptimal} is attained by the function $h^*(t) = \frac{1}{2}[1+\mathbb{E}(S|T=t)]$. Similarly, it can be shown that
\begin{equation}
    \inf_{h} L_{\mathcal{S}_n}(h) = \hat{H}(S|T),
\end{equation}
where $\hat{H}(\cdot)$ is the plug-in estimate of $H(\cdot)$. Hence,
\begin{equation}
\label{eq:ClassificationConditionalEntropyReduction}
    \inf_{h} L_{\mathcal{S}_n}(h) - \inf_{h} L(h) \leq |\hat{H}(S|T) -H(S|T)|.
\end{equation}
The large deviation bound in Theorem~2.1 in \cite{weissman2003inequalities} shows that, with probability at least $1-\delta$,
\begin{equation}
\label{eq:ClassificationWeissmanInq}
    \textnormal{TV}(\hat{P}_{S,T},P_{S,T}) \leq \sqrt{\frac{2d+\log(1/\delta)}{n}} \leq \frac{1}{2},
\end{equation}
where $\hat{P}_{S,T}$ is the empirical distribution of $S$ and $T$. Thus, \eqref{eq:ClassificationConditionalEntropyReduction} and the inequality by Alhejji and Smith \eqref{inq:AlhejjiSmith} imply that, with probability at least $1-\delta$,
\begin{equation}
\label{eq:ClassificationAlhejjiInq}
    \inf_{h} L_{\mathcal{S}_n}(h) - \inf_{h} L(h) \leq h_b(\textnormal{TV}(\hat{P}_{S,T},P_{S,T})).
\end{equation}
By plugging \eqref{eq:ClassificationWeissmanInq} in \eqref{eq:ClassificationAlhejjiInq}, the result follows.
\end{proof}

{\bf Remark.} Note that in the representation setting ($\mathcal{T}\subseteq\mathbb{R}^q$) under the squared-loss $\ell_2$, $\mathcal{L}$ is equal to the minimum mean square error (MMSE) of $S$ given $T$. Similarly, observe that in the classification setting ($\mathcal{T} = [d]$) under the log-loss $\ell_\textnormal{log}$, $\mathcal{L}$ is equal to the equivocation (conditional entropy) of $S$ given $T$, see \eqref{eq:ClassificationLogLossOptimal}. This highlights the operational interpretation of both losses.

\section{Applications and Numerical Experiments}
\label{Section:Applications}

We now consider a specific learning setting and apply Theorem~\ref{Thm:MainResultRepresentationSetting} to illustrate the usefulness of our main results. By explicitly computing $C_\eta$, we show the dependency of the Barron constant on the different elements of the problem.

\subsection{Setting}
\label{Subsection:Setting}

Assume that $S$ is a Rademacher random variable, i.e., $\mathbb{P}(S=1) = \mathbb{P}(S = -1) = 1/2$, and $U$, conditioned on $S$, is a Gaussian random vector $\mathcal{N}(Sv_0,{\rm I}_p)$ for some unit vector $v_0\in\mathbb{R}^p$. We further assume that $T$ is the truncated output of a linear classifier determined by a unit vector $v\in\mathbb{R}^p$, i.e., $T$ is the truncation of $\langle U, v\rangle$ to the interval\footnote{Recall that if $Z \sim \mathcal{N}(0,1)$, then $\mathbb{P}(|Z|>3) < 3\times10^{-3}$. Hence, the proposed truncation is almost immaterial.} $[-3,3]$. Hence, $T$ conditioned on $S$ is a truncated Gaussian random variable $\mathcal{N}(S\mu,1)$ where $\mu := \langle v,v_0 \rangle$. By the symmetry of the Gaussian distribution, it could be verified that, for all $t\in[-3,3]$,
\begin{equation}
    f_{\pm}(t)=\frac{e^{-(t\mp\mu)^2/2}}{\sqrt{2\pi}[\Phi(3+\mu)-\Phi(-3+\mu)]},
\end{equation}
where $\Phi(\cdot)$ is the cumulative distribution function of the standard Gaussian distribution. In order to use the upper bound in \eqref{eq:MainResultRepresentation}, it is necessary to compute $C_\eta$ as introduced in \eqref{eq:ExtensionEta}.

\subsection{Computation of Barron's Constant}

A direct computation shows that we can take $\eta:\mathbb{R}\to\mathbb{R}$ as
\begin{equation}
\label{eq:ExperimentEta}
    \eta(t) = \frac{e^{-(t-\mu)^2/2}-e^{-(t+\mu)^2/2}}{e^{-(t-\mu)^2/2}+e^{-(t+\mu)^2/2}} = \frac{1-e^{-2\mu t}}{1+e^{-2\mu t}} = \tanh(\mu t).
\end{equation}
A direct computation shows that $\eta'(t) = \mu\,\textnormal{sech}(\mu t)^2$. Recall that $\mathcal{F}(h')(\omega) = {\rm i} \omega \mathcal{F}(h)(\omega)$, and hence
\begin{align}
    C_\eta &= \frac{1}{\sqrt{2\pi}} \int_\mathbb{R} |\mathcal{F}(\eta')(\omega)| {\rm d}\omega\\
    \label{eq:RepresentationExampleBarronConstant} &= \frac{\mu}{\sqrt{2\pi}} \int_\mathbb{R} |\mathcal{F}(\textnormal{sech}(\mu \cdot)^2)(\omega)| {\rm d}\omega.
\end{align}
Using contour integration, it can be shown that
\begin{equation}
\label{eq:RepresentationExampleFourierTransform}
    \mathcal{F}(\textnormal{sech}(\cdot)^2)(\omega) = \sqrt{\frac{\pi}{2}} \omega\,\textnormal{csch}\left(\frac{\pi}{2}\omega\right),
\end{equation}
which is non-negative for all $\omega\in\mathbb{R}$. By applying the Fourier inversion formula to \eqref{eq:RepresentationExampleBarronConstant}, we conclude that
\begin{equation}
\label{eq:CetaMixture}
    C_\eta = \mu\,\textnormal{sech}(0)^2 = \langle v,v_0 \rangle.
\end{equation}

{\bf Remark.} Let $g_v:\mathbb{R}^q\to\mathbb{R}$ be the linear classifier $g_v(u) := \langle u,v \rangle$. Recall that, up to an immaterial truncation, $T = g_v(U)$ and $U|S \sim \mathcal{N}(Sv_0,{\rm I}_p)$. Hence, \eqref{eq:CetaMixture} shows the dependency of the Barron constant $C_\eta$ on the linear classifier $g_v$ and the statistical dependency of $S$ and $U$.

\subsection{Theoretical Guarantees}

By \eqref{eq:CetaMixture}, the upper bound in \eqref{eq:MainResultRepresentation} becomes
\begin{equation}
\label{eq:exp-bound}
    \mathcal{L}_{k,\mathcal{S}_n} - \mathcal{L} \leq 2(1+3\langle v,v_0 \rangle)^2\sqrt{\frac{2\log(1/\delta)}{n}} + \frac{36\langle v,v_0 \rangle^2}{k} + \frac{24\langle v,v_0 \rangle}{\sqrt{k}},
\end{equation}
as $\mathcal{K}=[-3,3]$. Observe that if $v$ is parallel to $v_0$, the linear classifier $g_v$ has the potential to leak sensitive attributes, whereas if $v$ is orthogonal to $v_0$, then $g_v$ avoids learning the sensitive attribute as long as the finite learner (attacker, model auditor, adversary, etc) fails in such a task. Our numerical experiments illustrate the validity of this assertion.

It could be verified that, for the squared-loss $\ell_2$, $\mathcal{L} \leq 1$. Indeed, the function $h_0 \equiv 0$ attains $L(h_0) = 1$. Moreover, it can verified that $\mathcal{L} = 1$ if and only if $S$ and $T$ are independent, which in the context of this experiment is equivalent to have $\mu = \langle v,v_0\rangle = 0$. Therefore, a learner with finite expressiveness and statistical capabilities fails whenever $\mathcal{L}_{k,\mathcal{S}_n}$ is close to 1. For $\mu$ small enough, the right hand side of \eqref{eq:exp-bound} is also small, which implies that a learner with arbitrary expressiveness and statistical capabilities will fail if the finite learner fails. In the present context that would imply that the output of the linear classifier $g_v$ reveals little information about $S$.

\subsection{Numerical Results}

In this experiment we assume that the learner has a dataset $\mathcal{S}_n=\{(S_i, T_i): i\in[n]\}$ with $n = 10^5$ i.i.d.~samples drawn from $(S,T)$ as specified in Section~\ref{Subsection:Setting}. The learner adopts a two-layer neural network with $k = 10^3$ neurons in the hidden layer, as formulated in \eqref{eq:DefNN}. Using the sample and this neural network architecture, the learner searches for the model, say $\tilde{h}_k\in\mathcal{H}_k$, that produces the least empirical $\ell_2$-loss during training. Then, the learner approximates $\mathcal{L}_{k,\mathcal{S}_n}$ by $L_{\mathcal{S}_n}(\tilde{h}_k)$. Observe that $\tilde{h}_k$ should be a minimizer of $L_{\mathcal{S}_n}$ in order to have $\mathcal{L}_{k,\mathcal{S}_n} = L_{\mathcal{S}_n}(\tilde{h}_k)$. Although it is known that finding the ERM rule is, in general, an NP-hard problem \cite{ben2003difficulty}, in the present context it is reasonable to assume that $L_{\mathcal{S}_n}(\tilde{h}_k)$ is a good approximation for $\mathcal{L}_{k,\mathcal{S}_n}$ provided that the two-layer neural network is trained exhaustively.

Recall that for the squared-loss, the optimal prediction rule for the true loss is the conditional mean $\eta(t) = \mathbb{E}(S|T=t)$ given in \eqref{eq:ExperimentEta}. Therefore, the minimal true loss is given by
\begin{align}
\label{eq:min-theoretical-loss-InExp}
    \mathcal{L}=\frac{\sqrt{2}}{\sqrt{\pi}p_\mu}\int_{-3}^{3} \frac{e^{-(t+\mu)^2/2}}{1+e^{-2\mu t}} {\rm d}t,
\end{align}
where $p_\mu := \Phi(3+\mu)-\Phi(-3+\mu)$.

Given $\delta=0.01$, \eqref{eq:exp-bound} provides the lower bound for the minimal true loss
\begin{align}
\label{eq:lower_bound-InExp}
\mathcal{L}_{k,\mathcal{S}_n}-\left( 2(1+3\mu)^2 \sqrt{\frac{2\log(100)}{n}} + \frac{(6\mu)^2}{k} + \frac{24\mu}{\sqrt{k}}\right)\leq \mathcal{L}
\end{align}
where the inequality holds with probability no less than $1-\delta=0.99$.

\begin{figure}[t]
	\centering
        \includegraphics[width=0.5\columnwidth]{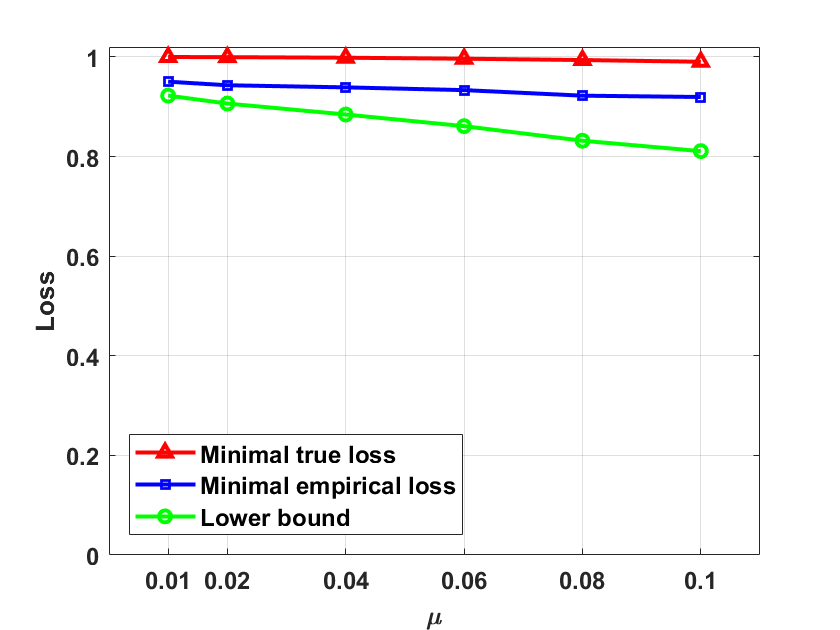}
	\caption{Plot of the minimal true loss $\mathcal{L}$, the minimal empirical loss $\mathcal{L}_{k,\mathcal{S}_n}$ and the lower bound for $\mathcal{L}$ given by Theorem \ref{Thm:MainResultRepresentationSetting} with $\delta=0.01$, $k=10^3$ and $n=10^5$.}
	\label{fig:Exp_MixGaussian}
\end{figure}

In Fig.~\ref{fig:Exp_MixGaussian}, we plot the minimum true loss $\mathcal{L}$ in \eqref{eq:min-theoretical-loss-InExp}, the minimum empirical loss $\mathcal{L}_{k,\mathcal{S}_n}$ and the lower bound in \eqref{eq:lower_bound-InExp} for each $\mu\in\{0.01, 0.02, 0.04, 0.06, 0.08, 0.1\}$. The minimum empirical loss is approximated by $L_{\mathcal{S}_n}(\tilde{h}_k)$ where $\tilde{h}_k$ is the model with the least empirical loss over hundreds of well-trained models\footnote{We train more than 500 randomly initialized models on TensorFlow using the Adam and stochastic gradient descent optimizers for $30$ training epochs. In each trial and each epoch, we calculate the empirical loss of the model over the whole dataset, and choose $\tilde{h}_k$ as the one with the least empirical loss.}. From Fig.~\ref{fig:Exp_MixGaussian}, we observe that (i) the minimal empirical loss is very close to the minimal true loss, and therefore, a two-layer neural network is strong enough for truncated Gaussian mixture data and linear classifiers; (ii) the lower bound for the minimal true loss is about $89\%-97\%$ of the minimal empirical loss. Therefore, a large minimal empirical loss for the simple two-layer neural network learner indicates that the the minimal true loss for an arbitrarily strong learner is still large and the protection of the sensitive variable is guaranteed.

\section{Concluding Remarks}

In this paper we provided conditions to guarantee that if a learner with limited expressiveness and statistical capabilities fails to learn a sensitive attribute, then no other learner will succeed. We have also shown through numerical experiments that our main results are meaningful for linear classifiers and data from a Gaussian mixture distribution. In this context, we theoretically found the dependency of the Barron constant on both the classifier and the data distribution. Overall, this work is a step towards model auditing with theoretical guarantees.

\subsubsection*{Acknowledgements}

This work was supported in part by the National Science Foundation under Grant CCF-1422358, CCF 1350914, CIF-1422358 and in part by a seed grant towards a Center for Data Privacy from Arizona State University.

\bibliographystyle{IEEEtran}
\bibliography{Bibliography}
\end{document}